\def\onedot{.}
\def\eg{\emph{e.g}\onedot} 
\def\ie{\emph{i.e}\onedot} 
\def\Vec#1{{\boldsymbol{#1}}}
\def\Mat#1{{\boldsymbol{#1}}}
\newcommand\dlmu[2][4cm]{\hskip1pt\underline{\hb@xt@ #1{\hss#2\hss}}\hskip3pt}
\newtheorem{definition}{Definition}
\newtheorem{proposition}{Proposition}
\newenvironment{proof}{\noindent\it Proof.\quad}{\hfill $\square$\par}
\begin{document}
\title{A Hyperbolic-to-Hyperbolic Graph Convolutional Network}

\author{Jindou Dai, Yuwei Wu\thanks{Corresponding author}~, Zhi Gao, and Yunde Jia\\
Laboratory of Intelligent Information Technology, School of Computer Science, \\ Beijing Institute of Technology (BIT), Beijing, 100081, China.\\
{\tt\small \{daijindou, wuyuwei, gaozhi\_2017, jiayunde\}@bit.edu.cn}
}

\maketitle
\pagestyle{empty}  
\thispagestyle{empty}

\begin{abstract}
Hyperbolic graph convolutional networks (GCNs) demonstrate powerful representation ability to model graphs with hierarchical structure. 
Existing hyperbolic GCNs resort to tangent spaces to realize graph convolution on hyperbolic manifolds, which is inferior because tangent space is only a local approximation of a manifold. 
In this paper, we propose a hyperbolic-to-hyperbolic graph convolutional network (H2H-GCN) that directly works on hyperbolic manifolds. 
Specifically, we developed a manifold-preserving graph convolution that consists of a hyperbolic feature transformation and a hyperbolic neighborhood aggregation. 
The hyperbolic feature transformation works as linear transformation on hyperbolic manifolds. 
It ensures the transformed node representations still lie on the hyperbolic manifold by imposing the orthogonal constraint on the transformation sub-matrix. 
The hyperbolic neighborhood aggregation updates each node representation via the Einstein midpoint. 
The H2H-GCN avoids the distortion caused by tangent space approximations and keeps the global hyperbolic structure. 
Extensive experiments show that the H2H-GCN achieves substantial improvements on the link prediction, node classification, and graph classification tasks. 

\end{abstract}

\begin{figure}[h]
\tiny
\centering

\subfigure[Existing hyperbolic GCNs.]{\begin{minipage}[t]{\linewidth}
{
\includegraphics[width=1.0\linewidth]{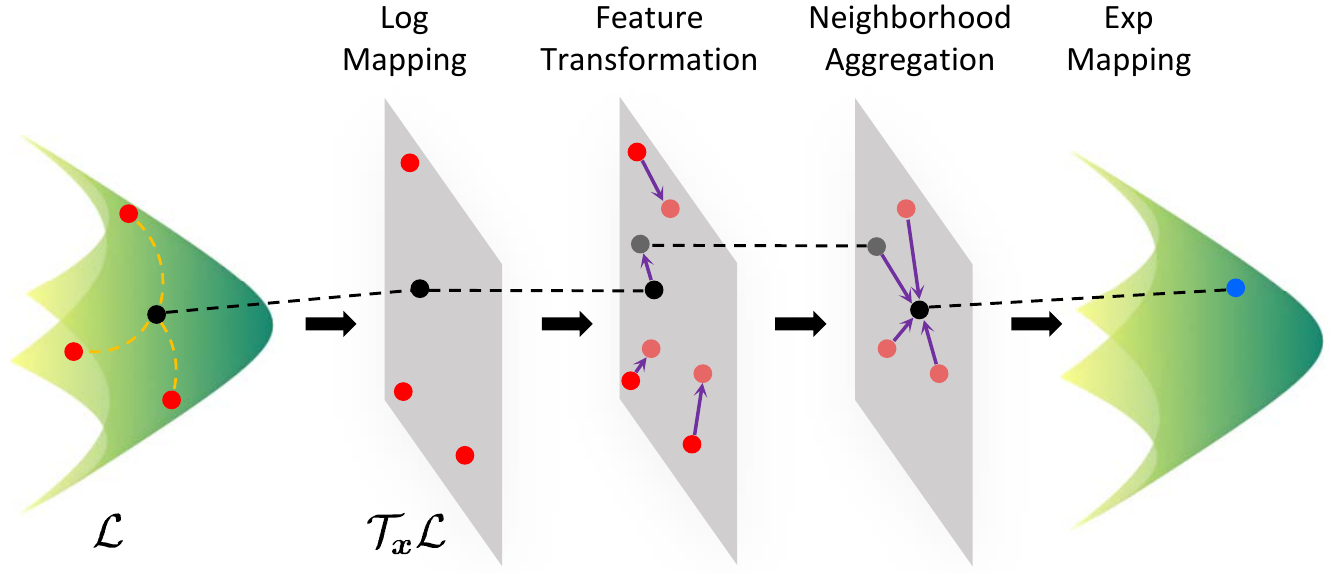}
}
\label{figure:tangentHGCN}
\end{minipage}}
\subfigure[The proposed H2H-GCN.]{\begin{minipage}[b]{\linewidth}
{
\includegraphics[width=1.0\linewidth]{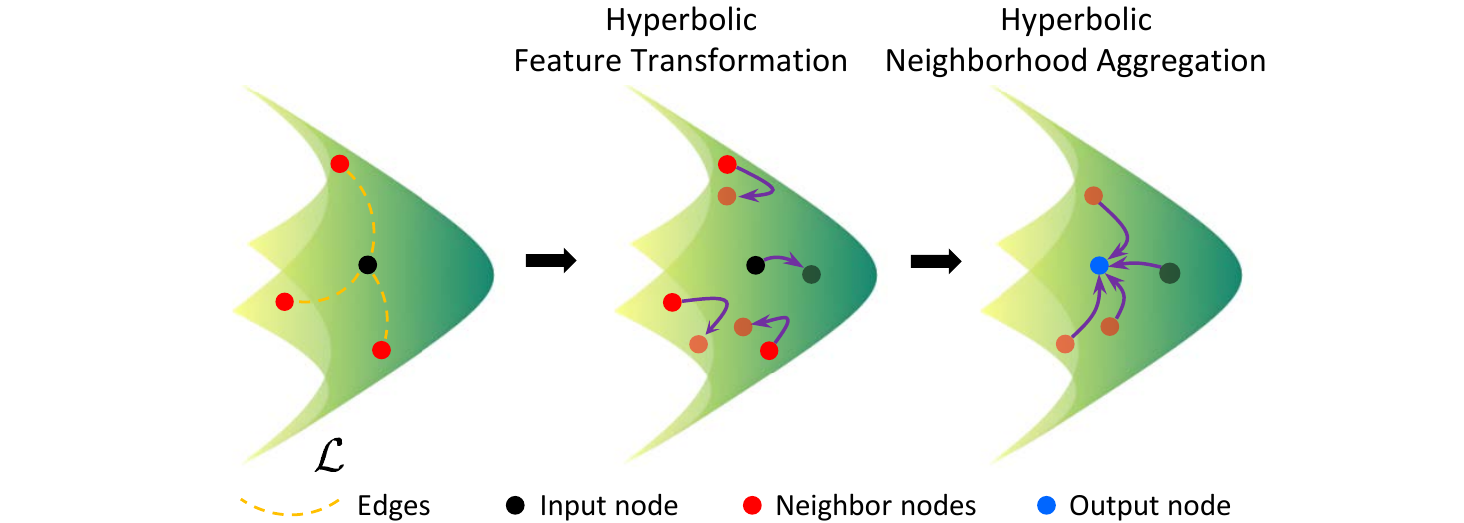}
}
\label{figure:H2H-GCN}
\end{minipage}}
\caption{
Comparisons of exisiting hyperbolic GCNs and the proposed H2H-GCN. 
At the $\ell$-th layer, (a) existing hyperbolic GCNs performs Euclidean graph convolutional opeartions, \eg, feature transformation and neighborhood aggregation, in the tangent space $\mathcal{T}_{\Vec{x}}\mathcal{L}$ that is a local approximation of the hyperbolic manifold $\mathcal{L}$; 
(b) H2H-GCN directly performs a hyperbolic feature transformation and a hyperbolic neighborhood aggregation on the hyperbolic manifold to learn node representations, keeping the global hyperbolic structure. 
}
\label{fig:experimentvisual}
\vspace{-8pt}
\end{figure}

\section{Introduction}
\label{section:introduction}

Graph convolutional networks (GCNs) have attracted increasing attention for graph representation learning, where nodes in a graph are typically embedded into Euclidean spaces~\cite{kipf2016semi,zhang2018end,xu2018powerful,velickovic2019deep,gilmer2017neural,hamilton2017inductive}. 
Several works reveal that many graphs, such as social networks and citation networks, exhibit a highly hierarchical structure~\cite{clauset2008hierarchical,krioukov2010hyperbolic,papadopoulos2012popularity}. 
Recent studies have shown that hyperbolic spaces can well capture such hierarchical structure compared to Euclidean spaces~\cite{nickel2017poincare,nickel2018learning,de2018representation}. 
Different from Euclidean spaces with zero curvature, hyperbolic spaces possess a constant negative curvature, which allows for an exponential growth of space volume with radius. 
This property of hyperbolic spaces pretty meets the requirements of hierarchical data (\eg,~trees) that need an exponential amount of space for branching, and encourages the development of GCNs in hyperbolic spaces to capture the hierarchical structure underlying graphs. 

Existing hyperbolic GCNs~\cite{liu2019hyperbolic,chami2019hyperbolic,zhang2019hyperbolic} resort to tangent spaces to realize graph convolution in hyperbolic spaces. 
Since the hyperbolic space is a Riemannian manifold rather than a vector space, basic operations (such as matrix-vector multiplication and vector addition) well defined in Euclidean spaces are not applicable in hyperbolic space. 
To generalize graph convolution to the hyperbolic space, the works in \cite{liu2019hyperbolic,chami2019hyperbolic,zhang2019hyperbolic} first flatten a hyperbolic manifold, and then apply Euclidean graph convolutional operations in the tangent space. 
The results are projected back to the hyperbolic manifold. 
The procedures follow a manifold-tangent-manifold scheme, as shown in Figure~\ref{figure:tangentHGCN}. 
These methods has promoted the development of GCNs in hyperbolic spaces and achieved good performance. 
However, the mapping between the manifold and the tangent space is only locally diffeomorphic, which may distort the global structure of the hyperbolic manifold, especially frequently using tangent space approximations \cite{huang2017cross,tuzel2008pedestrian}.

In this paper, we propose to design a hyperbolic GCN that directly works on the hyperbolic manifold to keep global hyperbolic structure, rather than relying on the tangent space. 
This requires that each step of graph convolution, \eg, feature transformation and neighborhood aggregation, satisfies a manifold-to-manifold principle. 
To this end, we present a hyperbolic-to-hyperbolic graph convolutional network (H2H-GCN), where graph convolutional operations are directly conducted on the hyperbolic manifold. 

Specifically, we developed a manifold-preserving graph convolution consisting of a hyperbolic feature transformation and a hyperbolic neighborhood aggregation. 
The hyperbolic feature transformation plays the role of linear transformation on hyperbolic manifolds, which requires multiplication of node representations by a transformation matrix. 
We constrain the transformation matrix to be a block diagonal matrix composed of a scalar 1 and an orthogonal matrix to ensure the transformed node representations still reside on the hyperbolic manifold. 
For hyperbolic neighborhood aggregation, we adopt the Einstein midpoint as the weighted message of neighbor nodes to update a node representation. 
Figure~\ref{figure:H2H-GCN} depicts that H2H-GCN directly carries out the two steps on hyperbolic manifolds. 
In constrast to existing hyperbolic GCNs, the proposed H2H-GCN can avoid the distortion caused by tangent space approximations and keep the global hyperbolic structure underlying graphs. 
We summarize the contributions of this paper as follows. 
\begin{itemize}
\item{} We propose a hyperbolic-to-hyperbolic graph convolutional network that directly performs graph convolution on hyperbolic manifolds, keeping the global hyperbolic structure underlying graphs. 
To the best of our knowledge, this is the first hyperbolic GCN without relying on tangent spaces. 
\item{} We developed a hyperbolic feature transformation that is a linear transformation on hyperbolic manifolds. 
The manifold constraint on the transformed hyperbolic representations is ensured by imposing the orthogonal constraint on the transformation sub-matrix. 
\end{itemize}

\section{Related Work}
\label{section:related_work}
GCNs generalize classical convolutional neural networks to graph domains. 
To realize the convolution on graphs, there are two types of GCNs. 
Spectral-based GCNs \cite{bruna2013spectral,defferrard2016convolutional,kipf2016semi,hu2020going} are based on the convolution theorem to perform convolution by transforming graph signals into the spectral domain via the graph Fourier transform. 
Spatial-based GCNs \cite{gilmer2017neural,hamilton2017inductive,velivckovic2017graph,pei2020geom,velickovic2019deep,xu2018powerful} update node representations by aggregating the message from its neighbor nodes, just like applying convolutional kernel on a local image patch. 
Despite a solid theoretical foundation of spectral-based GCNs, spatial-based GCNs have shown more superiorities due to efficiency, generality and flexibility.

Researchers discoverd that many graphs, \eg,~social networks and biological networks, usually exhibit a highly hierarchical structure \cite{krioukov2010hyperbolic,papadopoulos2012popularity}. 
Krioukov \etal~\cite{krioukov2010hyperbolic} pointed that the properties of strong clustering and power-law degree distribution in such graphs can be explained as a hidden hierarchy. 
Recent works have demonstrated powerful representation ability of hyperbolic spaces to model hierarchies that underlie taxonomies \cite{nickel2017poincare,nickel2018learning}, knowledge graphs \cite{sun2020knowledge,balazevic2019multi}, images \cite{khrulkov2020hyperbolic}, semantic classes \cite{liu2020hyperbolic}, actions \cite{long2020searching}, \textit{etc} \cite{sonthalia2020tree,chami2020trees,weber2020robust}, achieving promising performance. 
Liu \etal~\cite{liu2019hyperbolic} and Chami \etal~\cite{chami2019hyperbolic} proposed hyperbolic GCNs that extend GCNs to hyperbolic spaces to capture the hierarchy underlying graphs. 
The main difference with our work is that they perform Euclidean graph convolutional operations in the tangent space, following a manifold-tangent-manifold scheme. 
The proposed H2H-GCN developed a hyperbolic graph convolution in the hyperbolic space without relying on tangent spaces. 
We designed a Lorentz linear transformation for feature transformation on the Lorentz model, and adopted Einstein midpoint to calculate manifold statistics \cite{fletcher2013geodesic} as aggregation function. 
We claim that such a manifold-to-manifold learning principle can avoid the distortion caused by tangent space approximations and keep the global hyperbolic structure, that is beneficial to graph representation learning. 
\vspace{-2pt}

\section{Preliminaries}
\label{section:background}

\subsection{Hyperbolic Spaces}
\label{subsection:hyperbolic_geometry}

A Riemannian manifold $(\mathcal{M},g)$ is a differentiable manifold $\mathcal{M}$ equipped with a metric tensor $g$. 
It can be locally approximated to a linear Euclidean space at an arbitrary point $\Vec{x} \in \mathcal{M}$, and the approximated space is termed as a tangent space $\mathcal{T}_{\Vec{x}}\mathcal{M}$. 
Hyperbolic spaces are smooth Riemannian manifolds with a constant negative curvature \cite{benedetti2012lectures}. 
There are five isometric models of hyperbolic spaces: the Lorentz model (a.k.a the hyperbolid model), the Klein model, the Jemisphere model, the Poincar$\mathrm{\acute{e}}$ ball model, and the Poincar$\mathrm{\acute{e}}$ half-space model \cite{cannon1997hyperbolic}. 
In this paper, we choose the Lorentz model due to its numerical stablity \cite{nickel2018learning}.

Formally, the Lorentz model of an $n$-dimensional hyperbolic space is defined by the manifold $\mathcal{L}=\{\Vec{x} = [x_{0}, x_{1}, \cdots, x_{n}] \in\mathbb{R}^{n+1}: \langle\Vec{x},\Vec{x}\rangle_{\mathcal{L}}=-1,\Vec{x}_{0}>0\}$ endowed with the metric tensor $g=\mathrm{diag}([-1,\Vec{1}_{n}^{\top}])$ where $\mathrm{diag}(\cdot)$ function transforms a vector to a diagonal matrix. 
The Lorentz inner product induced by $g$ is defined as 
\begin{small}
\begin{equation}
\label{equation:Lorentz_inner_product}
\langle\Vec{x},\Vec{y}\rangle_{\mathcal{L}} = \Vec{x}^{\top}g\Vec{y}= - {x}_{0}{y}_{0}+\sum_{i=1}^{n}{x}_{i}{y}_{i}. 
\end{equation}
\end{small}

\vspace{-10pt}
\noindent In the following, we describe necessary operations. 

\noindent \textbf{Distance.} The distance on a manifold is termed as a geodesic that is commonly a curve representing the shortest path between two nodes. 
For $\forall \Vec{x},\Vec{y} \in \mathcal{L}$, the distance between them is given by 
\begin{equation}
\label{equation:distance}
d_{\mathcal{L}}(\Vec{x}, \Vec{y}) = \mathrm{arcosh}(-\langle\Vec{x},\Vec{y}\rangle_{\mathcal{L}}), 
\vspace{-2pt}
\end{equation}
where $\mathrm{arcosh}(\cdot)$ is the inverse hyperbolic cosine function. 

\vspace{2pt}
\noindent\textbf{Exponential and logarithmic maps.}
An exponential map $\mathrm{exp}_{\Vec{x}}(\Vec{v})$ is the function projecting a tangent vector $\Vec{v} \in \mathcal{T}_{\Vec{x}}\mathcal{M}$ onto $\mathcal{M}$. 
A logarithmic map projects vectors on the manifold back to the tangent space satisfying $\mathrm{log}_{\Vec{x}}(\mathrm{exp}_{\Vec{x}}(\Vec{v}))=\Vec{v}$. 
For $\Vec{x}, \Vec{y} \in \mathcal{L}$, and $\Vec{v} \in \mathcal{T}_{\Vec{x}}\mathcal{L}$ the exponential map $\mathrm{exp}_{\Vec{x}}:\mathcal{T}_{\Vec{x}}\mathcal{L} \to \mathcal{L}$ and the logarithmic map $\mathrm{log}_{\Vec{x}}:\mathcal{L} \to \mathcal{T}_{\Vec{x}}\mathcal{L}$ are given by 
\begin{small}
\begin{equation}
\label{equation:exp}
\begin{aligned}
\mathrm{exp}_{\Vec{x}}(\Vec{v}) 
&= \mathrm{cosh}(\|\Vec{v}\|_{\mathcal{L}})\Vec{x}+\mathrm{sinh}(\|\Vec{v}\|_{\mathcal{L}})\frac{\Vec{v}}{\|\Vec{v}\|_{\mathcal{L}}}, \\
\end{aligned}
\end{equation}
\begin{equation}
\label{equation:log}
\begin{aligned}
\mathrm{log}_{\Vec{x}}(\Vec{y}) 
&= \frac{\mathrm{arcosh}(-\langle\Vec{x},\Vec{y} \rangle_{\mathcal{L}})}{\sqrt{\langle \Vec{x}, \Vec{y} \rangle_{\mathcal{L}}^{2}-1}}(\Vec{y}+\langle \Vec{x} , \Vec{y}\rangle_{\mathcal{L}}\Vec{x}), \\
\end{aligned}
\end{equation}
\end{small}
where $\|\Vec{v}\|_{\mathcal{L}}=\sqrt{\langle \Vec{v}, \Vec{v} \rangle_{\mathcal{L}}}$ is the norm of $\Vec{v}$. 

\vspace{2pt}
\noindent\textbf{Isometric isomorphism.}
The Poincar$\mathrm{\acute{e}}$ ball model $\mathcal{B}$ and the Klein model $\mathcal{K}$ are two other models of hyperbolic spaces. 
The bijections between a node $\Vec{x}=[{x}_{0},{x}_{1},\cdots,{x}_{n}] \in \mathcal{L}$ and its unique corresponding node $\Vec{b}=[b_{0},b_{1}, \cdots,b_{n-1}] \in \mathcal{B}$ are given by 
\begin{small}
\begin{equation}
\label{equation:bijection_LB}
\begin{aligned}
p_{\mathcal{L} \to \mathcal{B}}(\Vec{x}) = \frac{[{x}_{1},\cdots,{x}_{n}]}{{x}_{0}+1}, \
p_{\mathcal{B} \to \mathcal{L}}(\Vec{b}) = \frac{[1+\|\Vec{b}\|^{2},2\Vec{b}]}{1-\|\Vec{b}\|^{2}}. 
\end{aligned}
\end{equation}
\end{small}

\vspace{-10pt}
\noindent The bijections between $\Vec{x}=[{x}_{0},{x}_{1},\cdots,{x}_{n}] \in \mathcal{L}$ and its unique corresponding node $\Vec{k} = [k_{0},k_{1}, \cdots,k_{n-1}]  \in \mathcal{K}$ are given by 
\begin{small}
\begin{equation}
\label{equation:bijection_LK}
\begin{aligned}
p_{\mathcal{L} \to \mathcal{K}}(\Vec{x}) = \frac{[{x}_{1},\cdots,{x}_{n}]}{{x}_{0}}, \
p_{\mathcal{K} \to \mathcal{L}}(\Vec{k}) = \frac{1}{\sqrt{1-\|\Vec{k}\|^{2}}}[1,\Vec{k}]. 
\end{aligned}
\end{equation}
\end{small}

\vspace{-10pt}
\noindent Geometric relationships among the Lorentz model $\mathcal{L}$, the Poincar$\mathrm{\acute{e}}$ ball model $\mathcal{B}$ and the Klein model $\mathcal{K}$ are presented in Figure~\ref{fig:relation_LPK}.

\subsection{Graph Convolutional Networks}
\label{subsection:GCNs}
\begin{figure}[t]
\centering
\includegraphics[width=0.75\columnwidth]{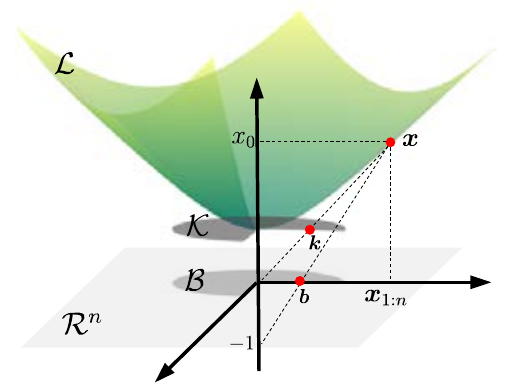}
\caption[width=2.2cm]{Geometric relationships among $\mathcal{L}$, $\mathcal{B}$ and $\mathcal{K}$. }
\label{fig:relation_LPK}
\vspace{-5pt}
\end{figure}

Let $\mathcal{G}=(\mathcal{V}, \mathcal{E})$ be a graph with a vertex set $\mathcal{V}$ and an edge set $\mathcal{E}$, 
and $ \{\Vec{x}_{i}^{E}  \}_{i \in \mathcal{V}}$ be node features where $^{E}$ denotes Euclidean representations. 
At the $\ell$-th layer of GCNs, the graph convolution can be formulated into two steps. 

\noindent\textbf{Feature Transformation:}
\begin{equation}
\label{equation:FT_in_E}
\Vec{\bar{h}}_{i}^{\ell,E} = \Mat{W}^{\ell}\Vec{h}_{i}^{\ell-1,E}. 
\end{equation}
where $\Vec{h}_{i}^{\ell-1,E}$ denotes the $i$-th node's representation at the ($\ell$-$1$)-th layer and $\Vec{h}_{i}^{0,E} = \Vec{x}_{i}^{E}$. 
$\Mat{W}^{\ell}$ is the learnable transformation matrix at the $\ell$-th layer. 
$\Vec{\bar{h}}_{i}^{\ell,E}$ is the intermediate representation of the $i$-th node, ready for the next step. 

\noindent\textbf{Neighborhood Aggregation:}
\begin{equation}
\label{equation:NAgg_in_E}
\left\{
   \begin{aligned}
      \Vec{m}_{i}^{\ell,E} & = (\Vec{\bar{h}}_{i}^{\ell,E} + \sum_{j \in \mathcal{N}(i)}w_{ij}\Vec{\bar{h}}_{j}^{\ell,E}) \\
      \Vec{h}_{i}^{\ell,E} & = \sigma(\Vec{m}_{i}^{\ell,E}) \\
   \end{aligned}
\right., 
\end{equation}
where $\mathcal{N}(i)$ denotes the set of neighbor nodes of the $i$-th node, and $w_{ij}$ is the aggregation weight. %
$\Vec{m}_{i}^{\ell,E}$ is the aggregated message, that is sent to a non-linear activation function $\sigma(\cdot)$ to output the node representation $\Vec{h}_{i}^{\ell,E}$ at the $\ell$-th layer. 

By stacking multiple graph convolutional layers, the feature transformation enables GCNs to learn desirable node embeddings for a target task, \eg,~more discriminative for classifications. 
The neighborhood aggregation enables GCNs to exploit graph topology structures.

\section{Hyperbolic-to-Hyperbolic GCN}
\label{section:h2hGCN}
We present H2H-GCN that directly performs graph convolution on hyperbolic manifolds to keep global hyperbolic structure. 
First, we explain how to generate hyperbolic node representations as input node features are usually Euclidean. 
Then, we elaborate the developed hyperbolic feature transformation and hyperbolic neighborhood aggregation. 
Next, we construct the H2H-GCN architecture used for link prediction, node classification and graph classification. 
Finally, we describe how to optimize parameters in the H2H-GCN.

\subsection{Hyperbolic Node Representations}
\label{subsection:generate_hyperbolic}

Let $\{\Vec{x}_{i}^{E}  \}_{i \in \mathcal{V}}$ be input Euclidean node features, and $\Vec{o} \coloneqq [1,0,\cdots,0] $ denote the origin on the manifold $\mathcal{L}$ of the Lorentz model. 
There is $\langle \Vec{o}, [0,\Vec{x}_{i}^{E}] \rangle_{\mathcal{L}} = 0$, where $\langle\cdot,\cdot \rangle_{\mathcal{L}}$ denotes the Lorentz inner product defined in Eq.~\eqref{equation:Lorentz_inner_product}. 
We can reasonably regard $[ 0,\Vec{x}_{i}^{E} ]$ as a node on the tangent space at the origin $\Vec{o}$. 
H2H-GCN uses the exponential map defined in Eq.~\eqref{equation:exp} to generate hyperbolic node representations on the Lorentz model: 
\begin{equation}
\label{equation:obtain_hyper}
\begin{aligned}
\Vec{x}_{i}^{\mathcal{L}} &= \mathrm{exp}_{\Vec{o}}\big([0, \Vec{x}_{i}^{E}]\big) \\
&= \Big[ \mathrm{cosh}\big(\| \Vec{x}_{i}^{E} \|_{2} \big), \mathrm{sinh} \big(\| \Vec{x}_{i}^{E} \|_{2}\big)\frac{\Vec{x}_{i}^{E}}{\| \Vec{x}_{i}^{E} \|_{2}} \Big]. 
\end{aligned}
\end{equation}

\vspace{-10pt}
\subsection{Hyperbolic Feature Transformation}
\label{subsection:feature_transformation}
The feature transformation in (Euclidean) GCNs defined in Eq.~\eqref{equation:FT_in_E} is a linear transformation realized via a matrix-vector multiplication. 
Nevertheless, it will break the hyperbolic manifold constraint while applying matrix-vector multiplication to hyperbolic node representations, making the transformed nodes not lie on hyperbolic manifolds. 
We developed a Lorentz linear transformation to tackle this problem. 
\vspace{-5pt}
\begin{definition} 
\label{definition:FA_in_L}
(The Lorentz linear transformation). 
For any $\Vec{x} \in \mathcal{L}$, the Lorentz linear transformation is defined as 
\begin{equation}
\label{equation:FA_in_L}
\begin{aligned}
    \Vec{y} &= \Mat{W} \Vec{x} \\
    \mathrm{s.t.} ~  \Mat{W} = &
    \begin{bmatrix} 
    1 & \Vec{0}^{\top} \\
    \Vec{0} & \widehat{\Mat{W}}
    \end{bmatrix}
    , \widehat{\Mat{W}}^{\top} \widehat{\Mat{W}} = \Mat{I}, 
\end{aligned}
\vspace{-8pt}
\end{equation}
where $\Mat{W}$ is a transformation matrix, and $\widehat{\Mat{W}}$ is called a transformation sub-matrix. 
$\Vec{0}$ is a column vector of zeros, and $\Mat{I}$ is an identity matrix. 
\end{definition}
\begin{proposition}
\label{lemma:Lorentz_linear_transformation}
The Lorentz linear transformation defined in Definition.~\ref{definition:FA_in_L} is manifold-preserving. 
It ensures that the output $\Vec{y}$ still lies on the manifold $\mathcal{L}$ of the Lorentz model. 
\end{proposition}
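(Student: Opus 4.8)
The plan is to verify directly the two defining conditions of the Lorentz manifold $\mathcal{L}$ for the transformed point $\Vec{y} = \Mat{W}\Vec{x}$, namely that its squared Lorentz norm equals $-1$ and that its zeroth coordinate is positive. Everything follows from the block structure of $\Mat{W}$ together with the orthogonality constraint on the sub-matrix, so the argument is a short and self-contained computation rather than anything requiring machinery beyond Eq.~\eqref{equation:Lorentz_inner_product}.

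First I would split coordinates to exploit the block form. Writing $\Vec{x} = [{x}_{0}, \widetilde{\Vec{x}}]$ with $\widetilde{\Vec{x}} = [{x}_{1}, \cdots, {x}_{n}]^{\top}$, the shape of $\Mat{W}$ gives at once
\begin{equation}
\label{equation:block_action}
\Vec{y} = \Mat{W}\Vec{x} = \big[\, {x}_{0},\ \widehat{\Mat{W}}\widetilde{\Vec{x}}\, \big],
\end{equation}
so the zeroth (time) coordinate is left untouched, $y_{0} = x_{0}$, while only the spatial part is acted on by $\widehat{\Mat{W}}$. This observation is the crux: the special block shape is exactly what decouples the time coordinate from the transformation.

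Second, I would compute $\langle\Vec{y},\Vec{y}\rangle_{\mathcal{L}}$ from Eq.~\eqref{equation:Lorentz_inner_product}. The temporal term $-y_{0}^{2} = -x_{0}^{2}$ is unchanged by \eqref{equation:block_action}, and for the spatial term I would invoke the constraint $\widehat{\Mat{W}}^{\top}\widehat{\Mat{W}} = \Mat{I}$, which yields $\|\widehat{\Mat{W}}\widetilde{\Vec{x}}\|^{2} = \widetilde{\Vec{x}}^{\top}\widehat{\Mat{W}}^{\top}\widehat{\Mat{W}}\widetilde{\Vec{x}} = \|\widetilde{\Vec{x}}\|^{2}$. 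Hence $\langle\Vec{y},\Vec{y}\rangle_{\mathcal{L}} = -x_{0}^{2} + \|\widetilde{\Vec{x}}\|^{2} = \langle\Vec{x},\Vec{x}\rangle_{\mathcal{L}} = -1$, where the last equality is the hypothesis $\Vec{x}\in\mathcal{L}$. In other words, the orthogonality of $\widehat{\Mat{W}}$ is precisely the condition making $\Mat{W}$ preserve the Lorentz quadratic form.

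There is no real obstacle here; the statement is essentially immediate once the coordinates are split. The only point that is easy to overlook is that membership in $\mathcal{L}$ requires \emph{both} $\langle\Vec{y},\Vec{y}\rangle_{\mathcal{L}}=-1$ and $y_{0}>0$, so I would finish by noting that the positivity condition holds trivially because $y_{0}=x_{0}>0$ by $\Vec{x}\in\mathcal{L}$. Conceptually, $\Mat{W}$ is a spatial rotation in $O(n)$ embedded as the stabilizer of the time axis inside the Lorentz group $O(n,1)$, so it is an isometry of $\mathcal{L}$ and in particular maps $\mathcal{L}$ to itself; the computation above is just the elementary verification of this fact for the specific block form used in Definition~\ref{definition:FA_in_L}.
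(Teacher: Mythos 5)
Your proposal is correct and follows essentially the same route as the paper's proof: split $\Vec{x}$ into its time coordinate and spatial part, observe the block form of $\Mat{W}$ leaves $x_{0}$ fixed, use $\widehat{\Mat{W}}^{\top}\widehat{\Mat{W}}=\Mat{I}$ to preserve the spatial norm, and check both $\langle\Vec{y},\Vec{y}\rangle_{\mathcal{L}}=-1$ and $y_{0}>0$. The closing remark identifying $\Mat{W}$ with the stabilizer of the time axis in $O(n,1)$ is a nice conceptual addition not present in the paper, but the verification itself is the same computation.
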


\begin{proof} 
For any $\Vec{x} = [{x}_{0}, {x}_{1}, \cdots, {x}_{n}] \in \mathcal{L}$, we have 
\begin{equation*}
\begin{aligned}
    - {x}_{0}^{2} + \Vec{x}_{1:n}^{\top} \Vec{x}_{1:n} = -1, ~\textrm{and } {x}_{0} > 0, \\
\end{aligned}
\end{equation*}
where $\Vec{x}_{1:n} = [{x}_{1}, {x}_{2}, \cdots, {x}_{n}]$. 
After applying the Lorentz linear transformation to $\Vec{x}$, we have 
\begin{equation*}
\Vec{y} = \Mat{W}\Vec{x} = \Big[{x}_{0}, \widehat{\Mat{W}}\Vec{x}_{1:n} \Big], 
\end{equation*}
satisfying 
\begin{equation*}
 y_{0} = x_{0} > 0,\\
\end{equation*}
and 
\begin{equation*}
\begin{aligned}
    & \langle \Vec{y}, \Vec{y} \rangle_{\mathcal{L}}  = - {x}_{0}^{2} + (\widehat{\Mat{W}}\Vec{x}_{1:n})^{\top} \widehat{\Mat{W}}\Vec{x}_{1:n}\\
    & = - {x}_{0}^{2} + \Vec{x}_{1:n}^{\top} (\widehat{\Mat{W}}^{\top} \widehat{\Mat{W}}) \Vec{x}_{1:n}\\
    & = - {x}_{0}^{2} + \Vec{x}_{1:n}^{\top} \Vec{x}_{1:n} \\
    & = - 1. \\
\end{aligned}
\end{equation*}
Thus, $\Vec{y}$ lies on the manifold $\mathcal{L}$ of the Lorentz model. 
\end{proof}

We utilize the Lorentz linear transformation as the hyperbolic feature transformation in H2H-GCN. 
At the $\ell$-th layer, we take the node representation from the previous layer $\Vec{h}_{i}^{\ell-1, \mathcal{L}}$ and the transformation matrix $\Mat{W}^{\ell}$ as input. 
The $i$-th node's intermediate representation is calculated by 
\begin{equation}
\label{equation:FA_in_L2}
\begin{aligned}
    \Vec{\bar{h}}_{i}^{\ell, \mathcal{L}} &= \Mat{W}^{\ell} \Vec{h}_{i}^{\ell-1, \mathcal{L}} \\
        \mathrm{s.t.}~  \Mat{W}^{\ell} &= 
        \begin{bmatrix}
        1 & \Vec{0}^{\top} \\
        \Vec{0} & \widehat{\Mat{W}}^{\ell}
        \end{bmatrix}
        , \widehat{\Mat{W}}^{\ell^{\top}} \widehat{\Mat{W}} = \Mat{I}, 
\end{aligned}
\end{equation}
where $\Vec{h}_{i}^{0, \mathcal{L}} = \Vec{x}_{i}^{\mathcal{L}}$. 
The intermediate node representation $\Vec{\bar{h}}_{i}^{\ell, \mathcal{L}}$ is ready for hyperbolic neighborhood aggregation in Section~\ref{subsection:neighborhood_aggregation}. 
We describe an effective way to learn transformation matrix $\Mat{W}^{\ell}$, a constrained parameter, via optimization on a Stiefel manifold in Section~\ref{subsection:optimization}. 
\begin{figure}[t]
\centering
\includegraphics[width=0.38\textwidth]{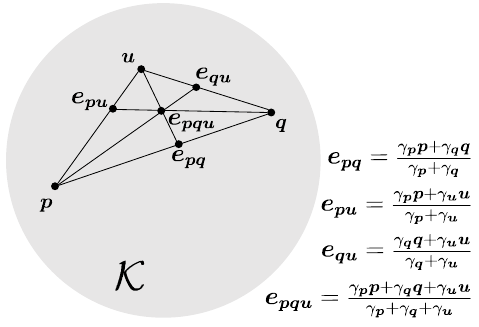}
\caption{Einstein midpoint on the Klein model $\mathcal{K}$, taking three nodes $\Vec{u}, \Vec{p}, \Vec{q} \in \mathcal{K}$ for example. 
}
\label{fig:Einstein}
\vspace{-10pt}
\end{figure}

\subsection{Hyperbolic Neighborhood Aggregation}
\label{subsection:neighborhood_aggregation}
The neighborhood aggregation in GCNs defined in Eq.\eqref{equation:NAgg_in_E} updates a node representation by aggregating the message from its neighbor node set, enabling GCNs to capture graph topological structure. 
A generalization of Euclidean mean aggregation in hyperbolic spaces is Fr\'echet mean \cite{frechet1948elements}. 
However, Fr\'echet mean is difficult to apply because it does not have a closed-form solution. 
We adopt the Einstein midpoint \cite{ungar2005analytic} as the hyperbolic neighborhood aggregation in H2H-GCN. 
In this case, our hyperbolic neighborhood aggregation possesses two desirable properties: translation invariance and rotation invariance. 
The aggregated hyperbolic average is invariant to translating the input node set by a same distance in a common direction, and invariant to rotating the input node set by a same angle around the origin. 

The Einstein midpoint takes the form in the Klein model, illustrated in Figure~\ref{fig:Einstein}. 
We first project the intermediate node representations from the Lorentz model to the Klein model, and then calculate the hyperbolic average via the Einstein midpoint. 
The aggregated hyperbolic average on the Klein model is projected back to the Lorentz model. 
Formally, given the intermediate representation of a node $\Vec{\bar{h}}_{i}^{\ell,\mathcal{L}}$ and the intermediate representations of its neighbor nodes $\{\Vec{\bar{h}}_{j}^{\ell, \mathcal{L}}\}_{j \in \mathcal{N}(i)}$, the hyperbolic neighborhood aggregation on the Lorentz model is given by 
\begin{equation}
\label{equation:einstein_midpoint}
\left\{
   \begin{aligned}
      \Vec{\bar{h}}_{j}^{\ell, \mathcal{K}} & = p_{\mathcal{L} \to \mathcal{K}}(\Vec{\bar{h}}_{j}^{\ell, \mathcal{L}}) \\
      \Vec{m}_{i}^{\ell, \mathcal{K}} & = \sum_{j \in \widehat{\mathcal{N}}(i)}\gamma_{j}\Vec{\bar{h}}_{j}^{\ell, \mathcal{K}} / \sum_{j \in \widehat{\mathcal{N}}(i)}\gamma_{j} \\
      \Vec{m}_{i}^{\ell, \mathcal{L}} & = p_{\mathcal{K} \to \mathcal{L}}(\Vec{m}_{i}^{\ell, \mathcal{K}}) \\
   \end{aligned}
\right., 
\end{equation}
where $\Vec{\bar{h}}_{j}^{\ell, \mathcal{K}}$ is the $j$-th node's intermediate representation on the Klein model. 
$p_{\mathcal{L} \to \mathcal{K}}(\cdot)$ and $p_{\mathcal{K} \to \mathcal{L}}(\cdot)$ are the isometric and isomorphic bijections between the Lorentz model and the Klein model as defined in Eq.~\eqref{equation:bijection_LK}. 
$\widehat{\mathcal{N}}(i)$ is a node set consisting of the $i$-th node and its neighbor nodes. 
$\gamma_{j}=\frac{1}{\sqrt{1-\| \bar{\Vec{h}}_{j}^{\mathcal{K}} \|^{2}}}$ denotes the Lorentz factor. 
$\Vec{m}_{i}^{\ell, \mathcal{K}}$ is the hyperbolic average on the Klein model that aggregates the message from $\widehat{\mathcal{N}}(i)$ via the Einstein midpoint. 
We get the hyperbolic average on the Lorentz model $\Vec{m}_{i}^{\ell, \mathcal{L}}$ by projecting $\Vec{m}_{i}^{\ell, \mathcal{K}}$ to $\mathcal{L}$.

The non-linear activation plays an important role in GCNs, which prevents a multi-layer network from collapsing into a single layer network. 
However, applying commonly-used non-linear activation functions (\eg,~ReLU) on the Lorentz representation will break the manifold constraint of the Lorentz model. 
We notice that the non-linear activation on the Poincar$\mathrm{\acute{e}}$ ball model $\mathcal{B}$ is manifold-preserving: for any $\Vec{b} \in \mathcal{B}$, we have $\sigma(\Vec{b}) \in \mathcal{B}$. 
Inspired by this, we project hyperbolic average $\Vec{m}_{i}^{\ell,\mathcal{L}}$ to the Poincar$\mathrm{\acute{e}}$ ball model to apply non-linear activation, and then project the result back to the Lorentz model, given by 
\begin{equation}
\label{equation:Lorentz_activation}
\Vec{h}_{i}^{\ell, \mathcal{L}} = p_{\mathcal{B} \to \mathcal{L}}\Big(\sigma\big(p_{\mathcal{L} \to \mathcal{B}}(\Vec{m}_{i}^{\ell, \mathcal{L}})\big)\Big), 
\end{equation}
where $p_{\mathcal{B} \to \mathcal{L}}(\cdot)$ and $p_{\mathcal{L} \to \mathcal{B}}(\cdot)$ are the isometric and isomorphic bijections between the Lorentz model and the Poincar$\mathrm{\acute{e}}$ ball model as defined in Eq.~\eqref{equation:bijection_LB}. 
After Eq.\eqref{equation:Lorentz_activation}, H2H-GCN obtains the output of the $\ell$-th layer: the $i$-th node's representation $\Vec{h}_{i}^{\ell, \mathcal{L}}$ on the Lorentz model.

\begin{table*}[!htb]  
\resizebox{\textwidth}{!}{
\centering
\begin{tabular}{@{}clcccccccc@{}}\cmidrule[\heavyrulewidth]{2-10}
& 
\textbf{Datasets} &\multicolumn{2}{c}{\textsc{Disease}}  &  \multicolumn{2}{c}{\textsc{Airport}}  &  \multicolumn{2}{c}{\textsc{PubMed}} & \multicolumn{2}{c}{\textsc{Cora}} \\
\cmidrule{3-10}
& \textbf{Methods} & {\textsc{LP}} & {\textsc{NC}} & {\textsc{LP}} & {\text{NC}} & {\textsc{LP}} & {\text{NC}} & {\textsc{LP}} & {\text{NC}}
\\ \cmidrule{2-10}
\multirow{4}{*}{\rotatebox{90}{}\rotatebox{90}{\hspace*{-6pt} Shallow}} 
& \textsc{Euc}~\cite{chami2019hyperbolic} & 59.8 $\pm$ 2.0 & 32.5 $\pm$ 1.1  &92.0 $\pm$ 0.0& 60.9 $\pm$ 3.4 & 83.3 $\pm$ 0.1 & 48.2 $\pm$ 0.7 & 82.5 $\pm$ 0.3 & 23.8 $\pm$ 0.7 \\ 
& \textsc{Hyp}~\cite{nickel2017poincare} & 63.5 $\pm$ 0.6 &  45.5 $\pm$ 3.3  & 94.5 $\pm$ 0.0& 70.2 $\pm$ 0.1 & 87.5 $\pm$ 0.1 & 68.5 $\pm$ 0.3 & 87.6 $\pm$ 0.2 & 22.0 $\pm$ 1.5  \\     
& \textsc{Euc-Mixed}~\cite{chami2019hyperbolic} &  49.6 $\pm$ 1.1 & 35.2 $\pm$ 3.4 &   91.5 $\pm$ 0.1 &68.3 $\pm$ 2.3 & 86.0 $\pm$ 1.3 & 63.0 $\pm$ 0.3& 84.4 $\pm$ 0.2 & 46.1 $\pm$ 0.4   \\  
& \textsc{Hyp-Mixed}~\cite{chami2019hyperbolic} & 55.1 $\pm$ 1.3 &  56.9 $\pm$ 1.5 & 93.3 $\pm$ 0.0& 69.6 $\pm$ 0.1 & 83.8 $\pm$ 0.3 & 73.9 $\pm$ 0.2& 85.6 $\pm$ 0.5 & 45.9 $\pm$ 0.3 \\  
\cmidrule{2-10}
\multirow{2}{*}{\rotatebox{90}{}\rotatebox{90}{NNs}}        
& \textsc{MLP}~\cite{chami2019hyperbolic} & 72.6 $\pm$ 0.6 & 28.8 $\pm$ 2.5  &  89.8 $\pm$ 0.5 & 68.6 $\pm$ 0.6 & 84.1 $\pm$ 0.9 & 72.4 $\pm$ 0.2 & 83.1 $\pm$ 0.5 & 51.5 $\pm$ 1.0 \\ 
& \textsc{HNN}~\cite{ganea2018hyperbolic} & 75.1 $\pm$ 0.3 & 41.0 $\pm$ 1.8   & 90.8 $\pm$ 0.2 & 80.5 $\pm$ 0.5 & 94.9 $\pm$ 0.1 & 69.8 $\pm$ 0.4 & 89.0 $\pm$ 0.1 & 54.6 $\pm$ 0.4\\
\cmidrule{2-10}
\multirow{3}{*}{\rotatebox{90}{}\rotatebox{90}{\hspace*{-14pt} GCNs}}
& \textsc{GCN}~\cite{kipf2016semi} & 64.7 $\pm $ 0.5 &  69.7 $\pm$ 0.4 &  89.3 $\pm$ 0.4 & 81.4 $\pm$ 0.6 & 91.1 $\pm$ 0.5 & 78.1 $\pm$ 0.2 & 90.4 $\pm$ 0.2 & 81.3 $\pm$ 0.3 \\ 
& \textsc{GAT}~\cite{velivckovic2017graph} & 69.8 $\pm $ 0.3  & 70.4 $\pm$ 0.4  &  90.5 $\pm$ 0.3 & 81.5 $\pm$ 0.3 & 91.2 $\pm$ 0.1 & 79.0 $\pm$ 0.3 & {93.7} $\pm$ 0.1 &  {83.0} $\pm$ 0.7\\ 
& \textsc{GraphSAGE}~\cite{hamilton2017inductive} & 65.9 $\pm$ 0.3  &  69.1 $\pm$ 0.6 &  90.4 $\pm$ 0.5 & 82.1 $\pm$ 0.5 & 86.2 $\pm$ 1.0 & 77.4 $\pm$ 2.2 & 85.5 $\pm$ 0.6 & 77.9 $\pm$ 2.4  \\ 
\multirow{3}{*}{ \rotatebox{90} {\hspace*{-24pt} \textsc{~Hyp} } \vspace{3pt} \rotatebox{90}{\hspace*{-24pt} GCNs }}  
& \textsc{SGC}~\cite{wu2019simplifying} & 65.1 $\pm$ 0.2  &  69.5 $\pm$ 0.2    & 89.8 $\pm$ 0.3 & 80.6 $\pm$ 0.1 & 94.1 $\pm$ 0.0 &  78.9 $\pm$ 0.0 & 91.5 $\pm$ 0.1 & 81.0 $\pm$ $0.1$ \\ 
\cmidrule{2-10}
& \textsc{HGCN}~\cite{chami2019hyperbolic}& {90.8} $\pm$ 0.3  & {74.5} $\pm$ 0.9 & {96.4} $\pm$ 0.1 & {90.6} $\pm$ 0.2 & {96.3} $\pm$ 0.0 & {80.3} $\pm$ 0.3 & 92.9 $\pm$ 0.1 & 79.9 $\pm$ 0.2 \\  
& H2H-GCN (Ours) & \textbf{97.8} $\pm$ 0.0& \textbf{90.3} $\pm$ 0.8  & \textbf{96.7} $\pm$ 0.0 & \textbf{91.0} $\pm$ 0.3 & \textbf{97.1} $\pm$ 0.0 & \textbf{82.3} $\pm$ 0.4 & \textbf{95.4} $\pm$ 0.0 & \textbf{83.6} $\pm$ 0.8  \\ 
\cmidrule[\heavyrulewidth]{2-10}
\end{tabular} }
\caption{ROC AUC for Link Prediction (LP), and F1 score (\textsc{Disease}, binary class) and accuracy (the others, multi-class) for Node Classification (NC) tasks. 
}
\label{table:hgcn_results}
\vspace{-2mm}
\end{table*}

\subsection{H2H-GCN Architecture}
\label{subsection:architecture}
\begin{small}
\begin{algorithm}[t]
\label{algorithm:H2H-GCN_embedding}
\small
\caption{\small{H2H-GCN embedding generation.}}
\KwIn{Graph $\mathcal{G}=(\mathcal{V}, \mathcal{E})$; node features $ \{\Vec{x}_{i}^{E}\}_{i \in \mathcal{V}}$; number of layers $L$; transformation matrices $\{\Mat{W}^{\ell}\}_{\ell=1}^{L}$; non-linearity activation function $\sigma(\cdot)$.}
\KwOut{H2H-GCN node embeddings $\{\Vec{h}_{i}^{L,\mathcal{L}}\}_{i \in \mathcal{V}}$.}
	Orthogonally initialize transformation sub-matrices $\{\widehat{\Mat{W}}^{\ell}\}_{\ell=1}^{L}$\;
	Construct $\Mat{W}^{\ell} = \begin{bmatrix}1 & \Vec{0}^{\top} \\ \Vec{0} & \widehat{\Mat{W}}^{\ell}\end{bmatrix},\forall \ell \in \{1,\cdots,L\}$\;
	Generate hyperbolic node representations $\{\Vec{x}_{i}^{\mathcal{L}}\}_{i \in \mathcal{V}}$ via Eq.~\eqref{equation:obtain_hyper}\;
	$\Vec{h}_{i}^{0, \mathcal{L}} = \Vec{x}_{i}^{\mathcal{L}}, \forall i \in \mathcal{V}$, \;
	\For {$\ell = 1$ to $L$}{
		Generate intermediate node representations $\{\Vec{\bar{h}}_{i}^{\ell, \mathcal{L}}\}_{i \in \mathcal{V}}$ via the hyperbolic feature transformation in Eq.~\eqref{equation:FA_in_L2}\;
		\For {$i \in \mathcal{V}$}{
			Generate hyperbolic average $\Vec{m}_{i}^{\ell, \mathcal{L}}$ by aggregating message from $\{\Vec{\bar{h}}_{j}^{\ell, \mathcal{L}}\}_{j \in \hat{\mathcal{N}}(i)}$ via the hyperbolic heighborhood aggregation in Eq.~\eqref{equation:einstein_midpoint}\;
			Generate the node representation $\Vec{{h}}_{i}^{\ell, \mathcal{L}}$ via the non-linear activation on $\Vec{m}_{i}^{\ell, \mathcal{L}}$ in Eq.~\eqref{equation:Lorentz_activation}\;
		}
	}
	\textbf{return} H2H-GCN node embeddings $\{\Vec{h}_{i}^{L,\mathcal{L}}\}_{i \in \mathcal{V}}$. 
\end{algorithm}
\end{small}

We summarize the H2H-GCN embedding generation algorithm as shown in Algorithm~\ref{algorithm:H2H-GCN_embedding}. 
Given a graph $\mathcal{G}=(\mathcal{V}, \mathcal{E})$ with a vertex set $\mathcal{V}$ and an edge set $\mathcal{E}$, H2H-GCN first maps input Euclidean node features $ \{\Vec{x}_{i}^{E} \}_{i \in \mathcal{V}}$ to hyperbolic space via Eq.~\eqref{equation:obtain_hyper}. 
The obtained hyperbolic node representations $ \{\Vec{h}_{i}^{0,\mathcal{L}} \}_{i \in \mathcal{V}}$ are sent to a multi-layer H2H-GCN. 
At the $\ell$-th layer, the input node representation $\Vec{h}_{i}^{\ell - 1, \mathcal{L}}$ from previous layer is passed through the hyperbolic feature transformation in Eq.~\eqref{equation:FA_in_L2}, and is updated via the hyperbolic neighborhood aggregation in Eq.~\eqref{equation:einstein_midpoint} and the non-linear activation in Eq.\eqref{equation:Lorentz_activation}. 
After $L$ layers, we obtain the H2H-GCN node embeddings $ \{\Vec{h}_{i}^{L,\mathcal{L}} \}_{i \in \mathcal{V}}$. 

For link prediction, we use the Fermi-Dirac decoder \cite{krioukov2010hyperbolic,nickel2017poincare} to calculate probability scores for the edge between the $i$-th node and the $j$-th node, given by  
\begin{equation}
\label{equation:fermi_decoder}
{p}\big((i,j) \in \mathcal{E} | \Vec{h}_{i}^{L,\mathcal{L}}, \Vec{h}_{j}^{L,\mathcal{L}}\big) = \big[e^{(d_{\mathcal{L}} {(\Vec{h}_{i}^{L,\mathcal{L}}, \Vec{h}_{j}^{L,\mathcal{L}})^{2}} -r)/t} + 1\big]^{-1}, 
\end{equation}
where $d_{\mathcal{L}}(\cdot,\cdot)$ is the hyperbolic distance function defined in Eq.~\eqref{equation:distance}, and $r$ and $t$ are hyper-parameters. 
Following \cite{chami2019hyperbolic}, we use the negative sampling strategy and the cross entropy loss for training. 

For node classification and graph classification, we exploit a centroid-based classification method studied in \cite{liu2019hyperbolic}. 
Specifically, we introduce a set of centroids $C=\{\Vec{c}_{1}^{\mathcal{L}}, \Vec{c}_{2}^{\mathcal{L}}, \cdots, \Vec{c}_{|C|}^{\mathcal{L}}\}$ lying on the Lorentz model, then calculate a distance matrix $\Mat{D} \in \mathbb{R}^{|\mathcal{V}| \times |C| }$ whose element $\Mat{D}_{i,j}=d_{\mathcal{L}}(\Vec{h}_{i}^{L,\mathcal{L}}, \Vec{c}_{j}^{\mathcal{L}})$ represents the distance between the $i$-th node embedding $\Vec{h}_{i}^{L,\mathcal{L}}$ and the $j$-th centroid $\Vec{c}_{j}^{\mathcal{L}}$. 
For node classification, we send $\Mat{D}_{i}$, the $i$-th row of distance matrix that contains the distance information between the $i$-th node and all centroids, to a classifier to predict the category of the $i$-th node. 
For graph classification, we apply average pooling to $\{\Mat{D}_{i}\}_{i=1}^{|\mathcal{V}|}$ as a readout operation to yield a graph embedding $\frac{1}{|\mathcal{V}|}\sum_{i=1}^{|\mathcal{V}|} \Mat{D}_{i}$, followed a classifier for prediction. 
On both classification tasks, we use softmax classifiers and cross entropy loss functions.

\vspace{-3pt}
\subsection{Optimization}
\label{subsection:optimization}
We explain how to learn parameters in H2H-GCN, especially the transformation matrix $\Mat{W}$ (omitting the layer number $^{\ell}$ for convenience) in the hyperbolic feature transformation Eq.~\eqref{equation:FA_in_L2}, that is an optimization problem with the orthogonal constraint. 
Other parameters can be learned by a standard gradient descent optimizer straightforwardly.

The transformation matrix $\Mat{W} = \begin{bmatrix}1 & \Vec{0}^{\top} \\\Vec{0} & \widehat{\Mat{W}}\end{bmatrix}$ is a block diagonal matrix that consists of a scalar $1$ and an orthogonal matrix $\widehat{\Mat{W}} \in \mathrm{St}(n',n)$ which resides on the Stiefel manifold. 
\vspace{-5pt}
\begin{definition}
\label{definition:Stiefel_manifold}
(The Stiefel manifold). 
The set of ($n' \times n$)-dimensional matrices, $n \le n'$, with orthonormal columns forms a compact Riemannian manifold called the Stiefel manifold $\mathrm{St(n',n)}$ \cite{boothby1986introduction}. 
\end{definition}
\begin{equation}
\label{equation:stiefel_manifold}
\mathrm{St(n',n)} \triangleq \{ \Mat{M} \in \mathbb{R}^{n' \times n} : \Mat{M}^{\top}\Mat{M} = \Mat{I} \}. 
\end{equation}

While updating $\Mat{W}$, we keep $1$ unchanged and introduce a Riemannian stochastic gradient descent optimizer to update $\widehat{\Mat{W}}$. 
Formally, let $J$ be the loss function, \eg,~cross entropy loss for classifications. 
$\widehat{\Mat{W}}$ is updated by 
\begin{equation}
\vspace{-5pt}
\label{equation:RSGD}
\left\{
   \begin{aligned}
   &  \Mat{P}^{(t)} = \eta \pi_{\widehat{\Mat{W}}^{(t)}}(\nabla_{\widehat{\Mat{W}}}^{(t)}) \\
   &  \widehat{\Mat{W}}^{(t+1)} = r_{\widehat{\Mat{W}}^{(t)}}(-\Mat{P}^{(t)}) \\
   \end{aligned}
\right., 
\vspace{0pt}
\end{equation}
where $\eta$ is the learning rate. 
$\nabla_{\widehat{\Mat{W}}}^{(t)} = \mathrm{d} J / \widehat{\Mat{W}}^{(t)}$ denotes the Eulcidean gradient of the loss function $J$ with respect to $\widehat{\Mat{W}}^{(t)}$ calculated at time $t$. 
$\pi_{{\widehat{\Mat{W}}^{(t)}}}(\cdot)$ is an orthogonal projection that transforms the Euclidean gradient to the Riemannian gradient 
\begin{equation}
\label{equation:ortho_proj}
\pi_{{\widehat{\Mat{W}}^{(t)}}} \big( {\nabla_{\widehat{\Mat{W}}}^{(t)}} \big) = {\nabla_{\widehat{\Mat{W}}}^{(t)}} - \frac{1}{2}{\widehat{\Mat{W}}^{(t)}} \Big( {\widehat{\Mat{W}}^{(t) \top}}{\nabla_{\widehat{\Mat{W}}}^{(t)}} + {\nabla_{\widehat{\Mat{W}}}^{(t)}}^{\top}{\widehat{\Mat{W}}^{(t)}} \Big). 
\end{equation}
$r_{{\widehat{\Mat{W}}^{(t)}}}(\cdot)$ is a retraction operation, defined as 
\begin{equation}
\label{equation:retraction}
\begin{aligned}
&r_{{\widehat{\Mat{W}}^{(t)}}}(-\Mat{P}^{(t)}) = \mathrm{{qf}}({\widehat{\Mat{W}}^{(t)}} - \Mat{P}^{(t)}), \\
\end{aligned}
\end{equation}
where $\mathrm{qf}(\cdot)$ extracts the orthogonal factor in the QR decomposition. 
The retraction operation prevents the updated $\widehat{\Mat{W}}^{(t+1)}$ from falling off the Stiefel manifold.

\section{Experiments}
\label{section:experiments}

We evaluate the proposed H2H-GCN on the link prediction, node classification and graph classification tasks, and comprehensively compare H2H-GCN with a variety of state-of-the-art Euclidean GCNs and hyperbolic GCNs.

\begin{table*}[!htbp]
\centering
\begin{tabular}{lccccc}
\toprule
& \multicolumn{5}{c}{\bf Dimensionality} \\
\cmidrule(l){2-6}
\bfseries Methods  & 3 & 5 & 10 & 20 & 256\\\midrule
Euclidean~\cite{liu2019hyperbolic} & 77.2 $\pm$ 0.12 & 90.0 $\pm$ 0.21 & 90.6 $\pm$ 0.17 & 94.8 $\pm$ 0.25 & 95.3 $\pm$ 0.17\\
HGNN~\cite{liu2019hyperbolic} & 94.1 $\pm$ 0.03 & 95.6 $\pm$ 0.14 & 96.4 $\pm$ 0.23 & 96.6 $\pm$ 0.22 & 95.3 $\pm$ 0.28\\
H2H-GCN (Ours) &  \textbf{95.4} $\pm$ 0.26  &  \textbf{96.7} $\pm$ 0.12  &  \textbf{96.8} $\pm$ 0.04  &  \textbf{97.0} $\pm$ 0.05  &  \textbf{97.2} $\pm$ 0.03 \\
\bottomrule
\end{tabular}
\vspace{5pt}
\caption{Results on synthetic graph classification where F1 (macro) score and standard deviation are reported. }
\label{table:synthetic}
\end{table*}
\subsection{Link Prediction and Node Classification}

The link prediction (LP) task is to predict the existence of links among nodes in a graph, and the node classification (NC) task is to predict labels of nodes in a graph. 
They have many applications such as predicting friendships among users in a social network and predicting research directions of papers in a citation network. 
For link prediction, we report area under the ROC curve (AUC), and for node classification, we report F1 score for binary-class datasets and accuracy for multi-class datasets. 
Following HGCN \cite{chami2019hyperbolic}, a link prediction regularization objective is added in the node classification task. 
\vspace{-5pt}

\paragraph{Datasets.}
\textsc{Disease} \cite{chami2019hyperbolic} is constructed by simulating the SIR disease spreading model \cite{anderson1992infectious}, where the label of a node indicates whether the node was infected or not, and the feature of a node indicates the susceptibility of the node to the disease. 
\textsc{Cora} and \textsc{Pubmed} \cite{sen2008collective} are citation network datasets where nodes are scientific papers and edges represent citation links. 
\textsc{Cora} contains $7$ classes of machine learning papers, and there are $2,708$ nodes, $5,429$ edges and $1,433$ features per node. 
\textsc{Pubmed} contains $3$ classes of medicine publications, and there are $19,717$ nodes, $44,338$ edges and $500$ features per node. 
\textsc{Airport} \cite{chami2019hyperbolic} is a flight network dataset where nodes are airports and edges represent the airline routes. 
There are $2,236$ nodes in total and the label of a node is the population of the country where the airport (node) belongs to. 
\vspace{-12pt}
\paragraph{Baselines.}
We consider four types of baseline methods: shallow methods, neural networks (NNs) methods, GCNs, and hyperbolic GCNs (\textsc{Hyp GCNs}). 
Shallow methods optimize to minimize a reconstruction loss, and the parameters in models act as an embedding look-up table. 
We consider Euclidean embeddings (\textsc{Euc}) \cite{chami2019hyperbolic} and its hyperbolic extension (\textsc{Hyp}) \cite{nickel2017poincare}. 
As the two embeddings fail to leverage node features, \textsc{Euc-Mixed} \cite{chami2019hyperbolic} and \textsc{Hyp-Mixed} \cite{chami2019hyperbolic} concatenate the shallow embeddings with node features for a fair comparison with other methods using node features. 
NNs methods only utilize the node features but does not consider graph structures. 
Compared NNs methods include Euclidean multi-layer percerptron (MLP) and its hyperbolic extension: hyperbolic neural networks (HNN)~\cite{ganea2018hyperbolic}. 
For GCNs, we compare H2H-GCN with several Euclidean state-of-the-art GCNs models: GCN \cite{kipf2016semi}, \textsc{GraphSAGE} \cite{hamilton2017inductive}, graph attention networks (GAT) \cite{velivckovic2017graph}, and simplified graph convolution (SGC) \cite{wu2019simplifying}. 
For \textsc{Hyp GCNs}, we consider HGCN \cite{chami2019hyperbolic} that performs graph convolutional operations in tangent spaces.

The comparisons are presented in Table~\ref{table:hgcn_results}. 
We notice that HNN, a generalization of MLP to hyperbolic spaces, outperforms MLP on most tasks. 
It indicates that hyperbolic spaces are more suitable for modeling graphs compared than Euclidean spaces. 
A similar conclusion can be drawn while comparing Euclidean GNNs with hyperbolic GCNs. 
HGCN works better than state-of-the-art Euclidean GCNs on most datasets. 

The proposed H2H-GCN consistently achieve the best performance on both tasks on all datasets. 
We take the \textsc{Disease} dataset as an example to analyze the effectiveness of H2H-GCN. 
The \textsc{Disease} dataset is a tree network that possesses a strong hierarchical structure. 
As hyperbolic spaces can be viewed as smooth versions of trees, hyperbolic GCNs should work better than Euclidean GCNs. 
The results are in line with expectation that both HGCN and H2H-GCN show significant improvement on the LP and NC tasks compared with Euclidean methods. 
In particular, H2H-GCN achieves an average of $22.7\%$ (LP) and $19.9\%$ (NC) performance gains than state-of-the-art Euclidean GCNs, and $7.0\%$ (LP) and $15.8\%$ (NC) performance gains than HGCN. 
It demonstrates that our H2H-GCN is superior to hyperbolic GCNs which rely on tangent spaces. 
We owe it to our developed hyperbolic graph convolution that directly works on the hyperbolic manifold. 
Both the hyperbolic feature transformation and the hyperbolic neighborhood aggregation are manifold-preserving. 
It can avoid the distortion caused by tangent space approximations and keep the global hyperbolic geometry underlying graphs.

\subsection{Graph Classification}

Graph classification is important to many real-world applications associated with graph-structured data, \eg, chemical drug analysis. 
We first evaluate our method on a synthetic graph dataset, and then on several commonly-used molecular graph datasets. 
\vspace{-8pt}

\subsubsection{Synthetic Graphs}
Following \cite{liu2019hyperbolic}, we take three graph generation algorithms: Erd\H{o}s-R\'enyi \cite{Erdos:1959random}, Barab\'asi-Albert \cite{Barabasi:1999emergence} and Watts-Strogatz \cite{Watts:1998smallworld} to construct a synthetic graph dataset. 
For each graph generation algorithm, we generate $6,000$ graphs and divide them into 3 equal parts for the training, validation, and test (see \cite{liu2019hyperbolic} for more generation details). 
Typical properties, such as small-world property of graphs generated by Watts-Strogatz algorithm and scale-free property of graphs generated by Barab\'asi-Albert algorithm, can be explained by an underlying hyperbolic geometry \cite{krioukov2010hyperbolic}, thus it is more suitable for modeling such graphs in hyperbolic spaces than in Euclidean spaces. 

We compare the proposed H2H-GCN with Euclidean embeddings \cite{liu2019hyperbolic} and HGNN \cite{liu2019hyperbolic}. 
The F1 scores of different embedding dimensionalities are presented in Table~\ref{table:synthetic}. 
The performance of HGNN and our method surpasses Euclidean embeddings by a large margin when embedding dimensionality is low. 
It is because hyperbolic spaces can well capture the hyperbolic geometry underlying these synthetic graphs. 
As the embedding dimensionality increases (\eg,~$256$), HGNN tends to be comparable with Euclidean alternative $95.3\%$, while our method achieves $97.2\%$, $1.9\%$ higher than HGNN. 
The reason is that the distortion caused by tangent space approximations in HGNN becomes signficant with the increase of dimensionality of embedding spaces, leading to an inferior performance. 
H2H-GCN tackles this problem by directly learning node embeddings in the hyperbolic space. 
It shows the best performance from $3$-dimensional embeddings to $256$-dimensional embeddings.

\vspace{-8pt}
\subsubsection{Molecular Graphs}
We evaluate our method on several chemical datasets to predict the function of molecular graphs. 
\textsc{D\&D} \cite{dobson2003distinguishing} consists of X-ray crystal structures. 
It has $1,178$ graphs in total, and $2$ classes indicting the molecular is an enzyme or not. 
\textsc{Proteins} \cite{borgwardt2005protein} has $1,113$ graphs, and $3$ classes of graphs representing helix, sheet or turn. 
\textsc{Enzymes} \cite{schomburg2004brenda} contains $600$ graphs and $6$ classes in total.

We compare our method with several state-of-the-art Euclidean GCNs, including DGCNN \cite{zhang2018end}, \textsc{DiffPool} \cite{ying2018hierarchical}, ECC \cite{simonovsky2017dynamic}, GIN \cite{xu2018powerful} and \textsc{GraphSAGE} \cite{hamilton2017inductive}, and a hyperbolic GCN, \ie,~ HGNN \cite{liu2019hyperbolic} that performs Euclidean graph convolutional operations in tangent spaces. 
In general, researchers adopt tenfold cross validation for model evaluation. 
However, as pointed out by \cite{errica2019fair}, the data splits are different and the experimental procedures are often ambiguous of different works, which results in unfair comparisons. 
To this end, the work in \cite{errica2019fair} provides a uniform and rigorous benchmarking of state-of-the-art models. 
In this part, we follow the same experimental procedures and use the same data splits as \cite{errica2019fair} for fair comparisons.

\begin{table}[!htb]
\resizebox{0.482\textwidth}{!}{
\centering
\begin{tabular}{l  c c c}
\midrule[\heavyrulewidth]
\bfseries Methods     &  \textsc{D\&D}  &  \textsc{Proteins}  & \textsc{Enzymes}  \\
\hline
\textsc{DGCNN}~\cite{zhang2018end}              & 76.6 $\pm$ 4.3  & 72.9 $\pm$ 3.5 & 38.9 $\pm$ 5.7 \\
\textsc{DiffPool}~\cite{ying2018hierarchical}   & 75.0 $\pm$ 3.5  & 73.7 $\pm$ 3.5 & 59.5 $\pm$ 5.6 \\
\textsc{ECC}~\cite{simonovsky2017dynamic}       & 72.6 $\pm$ 4.1  & 72.3 $\pm$ 3.4 & 29.5 $\pm$ 8.2 \\
\textsc{GIN}~\cite{xu2018powerful}              & 75.3 $\pm$ 2.9  & 73.3 $\pm$ 4.0 & 59.6 $\pm$ 4.5 \\
\textsc{GraphSAGE}~\cite{hamilton2017inductive} & 72.9 $\pm$ 2.0  & 73.0 $\pm$ 4.5 & 58.2 $\pm$ 6.0 \\
\hline
\textsc{HGNN}~\cite{liu2019hyperbolic}          & 75.8 $\pm$ 3.3  & 73.7 $\pm$ 2.3 & $51.3 \pm 6.1$ \\
\textsc{H2H-GCN} (Ours)            & \textbf{78.2} $\pm$ 3.3  & \textbf{74.4} $\pm$ 3.0 & \textbf{61.3} $\pm$ 4.9\\
\midrule[\heavyrulewidth]
\end{tabular}
}
\caption{Results on chemical graph classification where mean accuracy and standard deviation are reported. }
\label{table:result_summary}
\end{table}

The mean accuracy and standard deviation are reported in Table~\ref{table:result_summary}. 
We observe that HGNN are comparable with Euclidean methods, which illustrates two possible reasons: either hyperbolic GCNs are not suitable for the three datasets, or some factors in HGNN limit representation ability of hyperbolic GCNs. 
The performance of H2H-GCN may give the answer. 
It achieves the best performance on all datasets: $1.6\%$ higher than \textsc{DGCNN} on \textsc{D\&D}, $0.7\%$ higher than \textsc{DiffPool} on \textsc{Proteins}, and $1.7\%$ higher than \textsc{GIN} on \textsc{Enzymes}. 
It indicates that there are some limitations in HGNN and H2H-GCN overcomes it. 
Compared with HGNN that does graph convolutional operations in the tangent space, the key difference lies in the propoed H2H-GCN performs a hyperbolic graph convolution in the hyperbolic space. 
In this way, H2H-GCN preservs the global hyperbolic geometry, leading to a superior performance.

\begin{table}[!tbp]
\scriptsize
\renewcommand\tabcolsep{2.8pt} 
\centering
\begin{tabular}{lccccc}
\toprule
& \multicolumn{5}{c}{\bf Dimensionality} \\
\cmidrule(l){2-6}
\bfseries Methods & 3 & 5 & 10  & 20 & 128 \\\midrule
Euclidean~\cite{liu2019hyperbolic}	& 64.2 $\pm$ 4.9  &  71.2 $\pm$ 3.4  &  76.2 $\pm$ 1.5  &  78.1 $\pm$ 2.1 &  80.4 $\pm$ 0.9 \\
HGNN~\cite{liu2019hyperbolic}		& 65.3 $\pm$ 3.6  &  71.0 $\pm$ 3.4  &  76.1 $\pm$ 1.5  &  79.2 $\pm$ 1.6 &  80.1 $\pm$ 0.9 \\
HGCN~\cite{chami2019hyperbolic}		& 70.8 $\pm$ 1.6  &  75.4 $\pm$ 1.7  &  78.1 $\pm$ 0.8  &  79.7 $\pm$ 1.4 &  81.7 $\pm$ 0.7 \\
H2H-GCN (Ours)	& \textbf{73.1} $\pm$ 2.5  &  \textbf{77.8} $\pm$ 0.6  &  \textbf{79.9} $\pm$ 0.9  &  \textbf{81.2} $\pm$ 0.9 &  \textbf{83.6} $\pm$ 0.8  \\
\bottomrule
\end{tabular}
\vspace{5pt}
\caption{Comparisons of embedding dimensionality for node classification on \textsc{Cora} where accuracy and standard deviation are reported. }
\label{table:cora_dim}
\vspace{-12pt}
\end{table}


\subsection{Dimensionality Comparisons}
We test the effect of embedding dimensionality from $3$ to $128$ for node classification on \textsc{Cora}, and report the performance of Euclidean embedding \cite{liu2019hyperbolic}, HGNN \cite{liu2019hyperbolic}, HGCN \cite{chami2019hyperbolic} and the proposed H2H-GCN in Table~\ref{table:cora_dim}. 
HGNN gets comparable results with Euclidean embeddings, while HGCN shows pretty improvments. 
It outperforms HGCN, $2.3\%$ and $1.9\%$ higher than HGCN when embedding dimensionalities are $3$ and $256$, respectively. 
We claim that the distortion caused by tangent space approximations exists in both low and high embedding dimensionalities. 
Although increasing the dimensionality can improve performance, it cannot solve this problem. 
H2H-GCN tackles it by proposing a hyperbolic graph convolution to directly work on the hyperbolic manifold. 
Such a manifold-to-manifold method achieves remarkable improvments. 
H2H-GCN works the best in all embedding dimensionalities.

\section{Conclusion}
\label{section:conclusion}

In this paper, we have presented a hyperbolic-to-hyperbolic graph convolutional network (H2H-GCN) for embedding graph with hierarchical structure into hyperbolic spaces. 
The developed hyperbolic graph convolution which consists of a hyperbolic feature transformation and a hyperbolic neighborhood aggregation, can be directly conducted on hyperbolic manifolds. 
The both operations can ensure that the output still lies on the hyperbolic manifold. 
In constrast to existing hyperbolic GCNs relying on tangent spaces, H2H-GCN can avoid the distortion caused by tangent space approximations and keep the global hyperbolic geometry underlying graphs. 
Extensive experiments on link prediction, node classification and graph classification have showed that H2H-GCN achieves competitive results compared with state-of-the-art Euclidean GCNs and existing hyperbolic GCNs. 

\vspace{-5pt}
\paragraph{Acknowledgments.}
This work was supported by the Natural Science Foundation of China (NSFC) under Grants No.  61773062 and No. 62072041.

{\small
\bibliographystyle{ieee_fullname}
\bibliography{egbib}
}

\end{document}